\documentclass[letterpaper, 10 pt, conference]{ieeeconf}

\IEEEoverridecommandlockouts   
\overrideIEEEmargins

\usepackage[binary-units=true]{siunitx}
\usepackage{times}
\usepackage{amsmath,amssymb}
\usepackage{accents}
\usepackage{graphicx}
\usepackage{bm}

\usepackage[inline]{enumitem}
\usepackage{array}
\usepackage{multicol}
\usepackage{multirow}
\usepackage{tabulary}
\usepackage{longtable}
\usepackage{booktabs}
\usepackage{caption}
\usepackage{subcaption}

\usepackage{todonotes}
\usepackage{cite}
\usepackage[hidelinks]{hyperref}
\urlstyle{same}
\usepackage{algorithm}
\usepackage{algpseudocode}
\usepackage{breqn}
\usepackage{tcolorbox}
\usepackage{mathtools}
\usepackage{tikz}
\usetikzlibrary{arrows}
\usetikzlibrary{calc}
\usetikzlibrary{arrows,decorations.markings}
\usepackage{nomencl}
\usepackage{balance}

\usetikzlibrary{shadows,trees,calc,arrows,shapes,positioning}
\allowdisplaybreaks



\definecolor{light-gray}{gray}{0.95}
\definecolor{dark-gray}{gray}{0.5}
\definecolor{mygray}{gray}{0.75}
\newcommand{\BIN}{\begin{bmatrix}}
\newcommand{\BOUT}{\end{bmatrix}}

\newtheorem{remark}{Remark}




\pgfdeclarelayer{bg1}   
\pgfdeclarelayer{bg}  
\pgfsetlayers{bg1,bg,main}  

\setcounter{secnumdepth}{3}  
\setcounter{tocdepth}{3}     

\definecolor{orange}{rgb}{0.99,0.69,0.07}
\definecolor{lightgray}{gray}{0.85}
\definecolor{light-gray}{gray}{0.95}
\definecolor{dark-gray}{gray}{0.5}

\tikzset{cross/.style={cross out, draw=black, minimum size=2*(#1-\pgflinewidth), inner sep=0pt, outer sep=0pt},
cross/.default={1pt}}




 \makeatletter
 \newcommand\fs@spaceruled{\def\@fs@cfont{\bfseries}\let\@fs@capt\floatc@ruled
   \def\@fs@pre{\vspace{5pt}\hrule height.8pt depth0pt \kern2pt}%
   \def\@fs@post{\kern2pt\hrule\relax}%
   \def\@fs@mid{\kern2pt\hrule\kern2pt}%
   \let\@fs@iftopcapt\iftrue}
 \makeatother

\newtheorem{theorem}{Theorem}
\newtheorem{definition}{Definition}[section]

\usepackage{tikz-qtree}
\usetikzlibrary{shadows,trees,calc,arrows,shapes,positioning}
\algnewcommand{\IIf}[1]{\State\algorithmicif\ #1\ \algorithmicthen}
\algnewcommand{\EndIIf}{\unskip\ \algorithmicend\ \algorithmicif}

\title{\LARGE \bf Monte-Carlo Tree Search with Prioritized Node Expansion for Multi-Goal Task Planning}
\author{Kai Pfeiffer, Leonardo Edgar, Quang-Cuong Pham%
	\thanks{The authors are with the Schaeffler Hub for Advanced Research at NTU and School of Mechanical and Aerospace Engineering, Nanyang Technological University, Singapore}%
	\thanks{This work is partly supported by the Schaeffler Hub for Advanced Research at NTU, under the ASTAR IAF-ICP Programme ICP1900093.}%
}

\begin{document}
	\maketitle 
	\thispagestyle{empty}
	\pagestyle{empty}
	
	\begin{abstract}
		Symbolic task planning for robots is computationally challenging due to the combinatorial complexity of the possible action space. This fact is amplified if there are several sub-goals to be achieved due to the increased length of the action sequences.
		In this work, we propose a multi-goal symbolic task planner for deterministic decision processes based on Monte Carlo Tree Search. We augment the algorithm by prioritized node expansion which prioritizes nodes that already have fulfilled some sub-goals. 
		Due to its linear complexity in the number of sub-goals, our algorithm is able to identify symbolic action sequences of 145 elements to reach the desired goal state with up to 48 sub-goals while the search tree is limited to under 6500 nodes. 
		We use action reduction based on a kinematic reachability criterion to further ease computational complexity.
		We combine our algorithm with  object localization and motion planning and apply it to a real-robot demonstration with two manipulators in an industrial bearing inspection setting.
	\end{abstract}
	
	\section{Introduction} 
	
	Robots have drawn much attention in industrial settings, for example for de-caking of 3D-printed parts~\cite{Huy2020}, airplane assembly~\cite{Pfeiffer2017}, and construction~\cite{Lim2021}.
	In this work, we want to consider visual inspection, specifically of bearings. Due to the repetitive nature of the task and the need of handling high volumes of heavy bearings, this task would benefit from automation in terms of ergonomy, quality, and reliability. 
	This problem constitutes a multi-goal task and motion planning problem as multiple goals (inspection of several object faces, discarding, ...) need to be achieved for several objects. 
	The presence of several goals makes task planning significantly more difficult as long and complicated task plans with multiple robots have to be identified. To address this difficulty, we propose a fast depth-first variation of Monte Carlo Tree Search (MCTS)~\cite{Coulom206}  for deterministic decision processes based on prioritized node expansion (PNE). It is able to efficiently plan long action sequences for such multi-goal scenarios. A kinematic reachability criterion is used during the tree search to avoid kinematically infeasible logic sequences. A selection from the logic action sequence of length 35 for the inspection of two bearings with two UR3e robots constituting 12 sub-goals is given in Fig.~\ref{fig:key_trafo}.
	
	\begin{figure}[tp!]
		\vspace{6pt}
		\begin{subfigure}{0.49\columnwidth}
			\includegraphics[width=\linewidth]{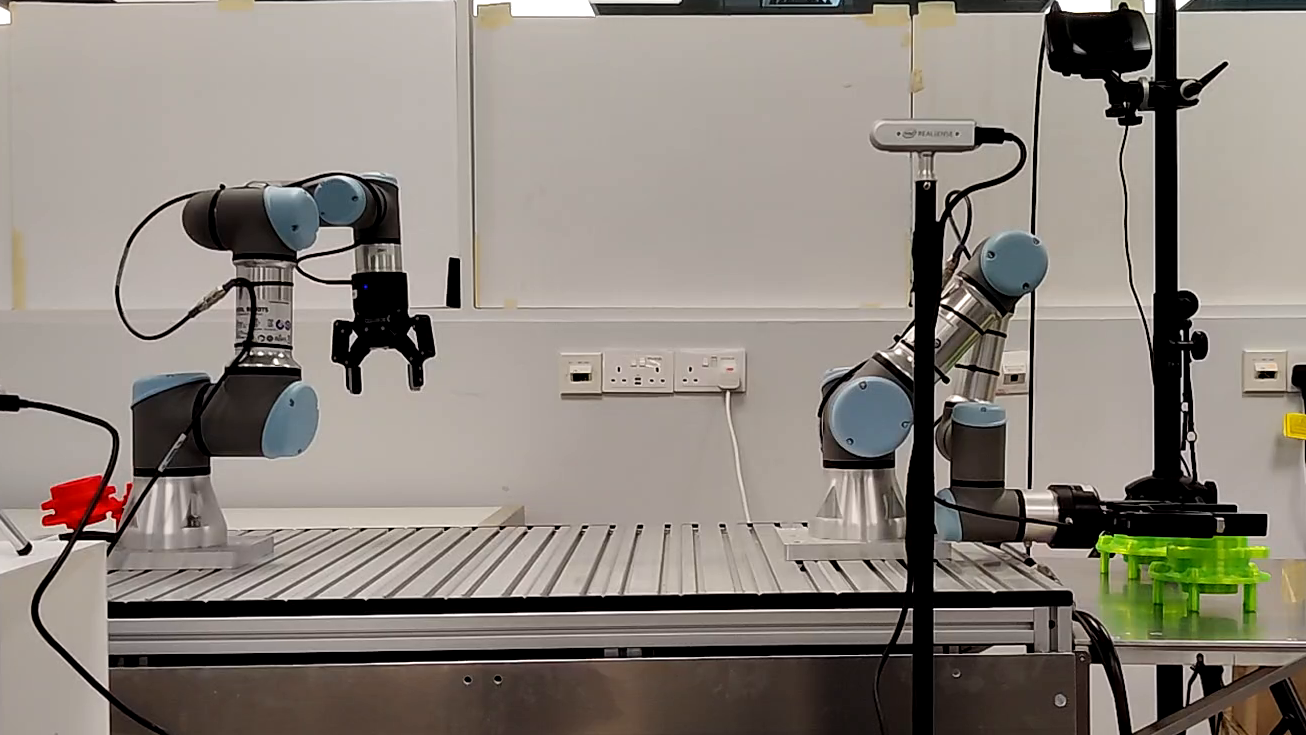}
			\caption{Side pick} \label{fig:side_pick}
		\end{subfigure}%
		\hspace*{\fill}   
		\begin{subfigure}{0.49\columnwidth}
			\includegraphics[width=\linewidth]{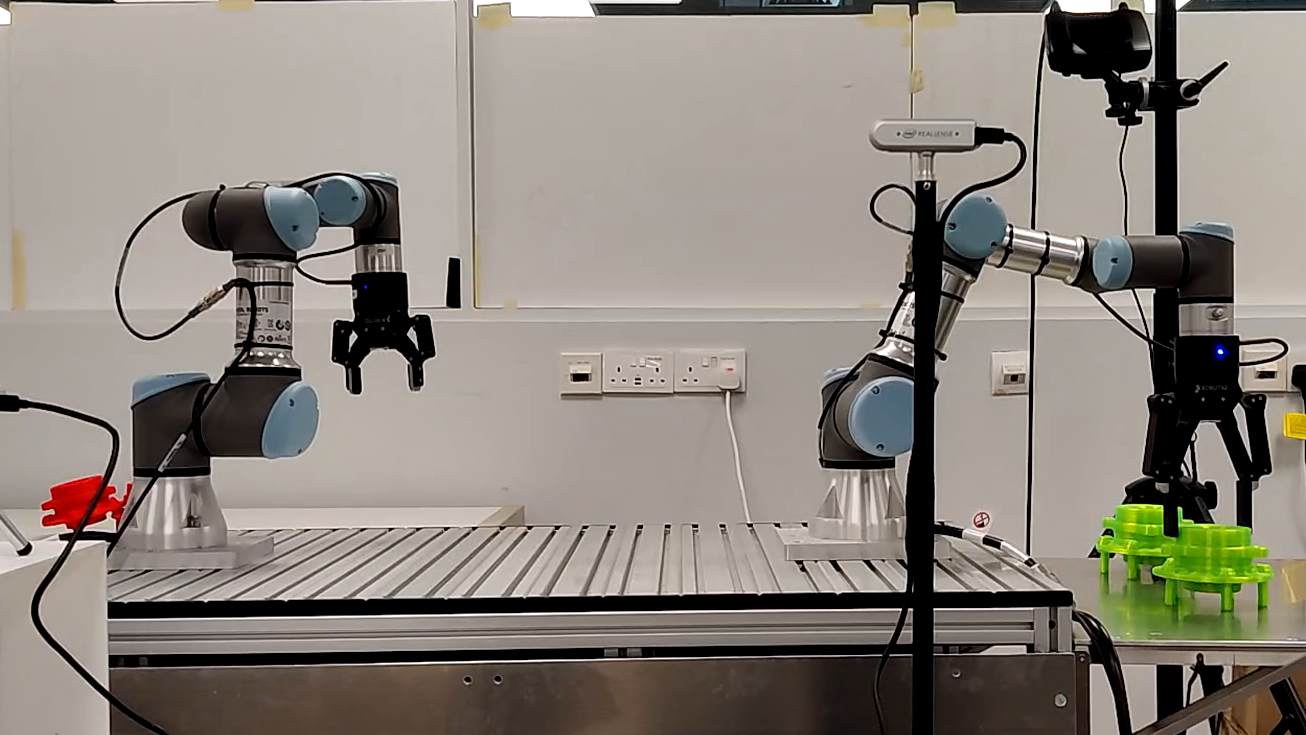}
			\caption{Top pick} \label{fig:top_pick}
		\end{subfigure}   
		
		\begin{subfigure}{0.49\columnwidth}
			\includegraphics[width=\linewidth]{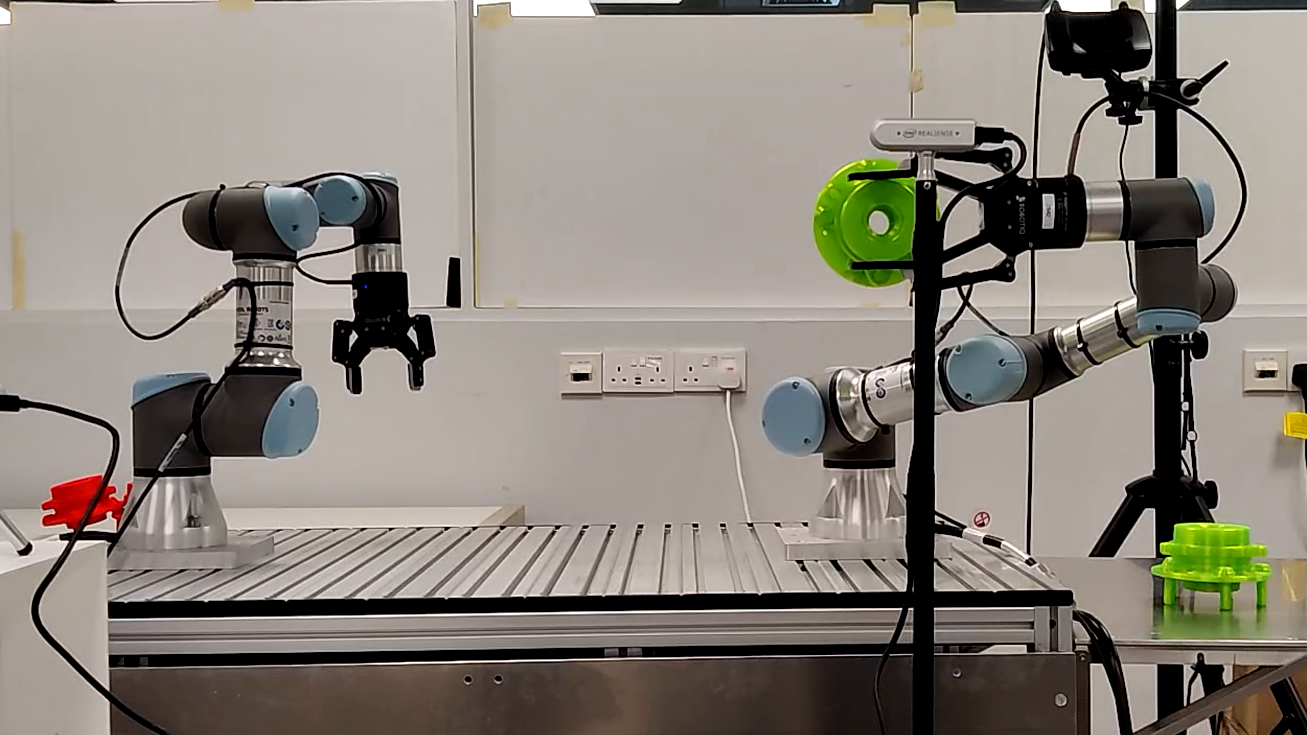}
			\caption{Present top} \label{fig:present_top}
		\end{subfigure}%
		\hspace*{\fill}   
		\begin{subfigure}{0.49\columnwidth}
			\includegraphics[width=\linewidth]{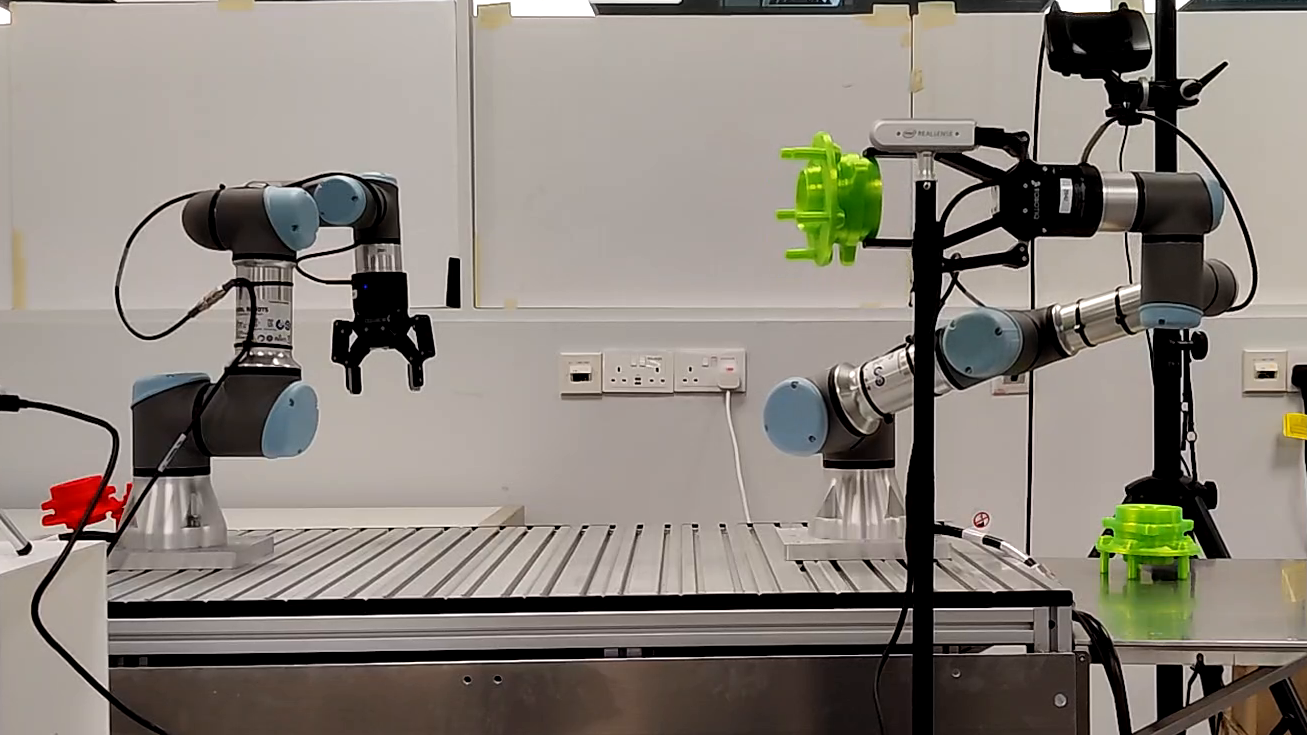}
			\caption{Present side} \label{fig:present_side}
		\end{subfigure}%
		
		\begin{subfigure}{0.49\columnwidth}
			\includegraphics[width=\linewidth]{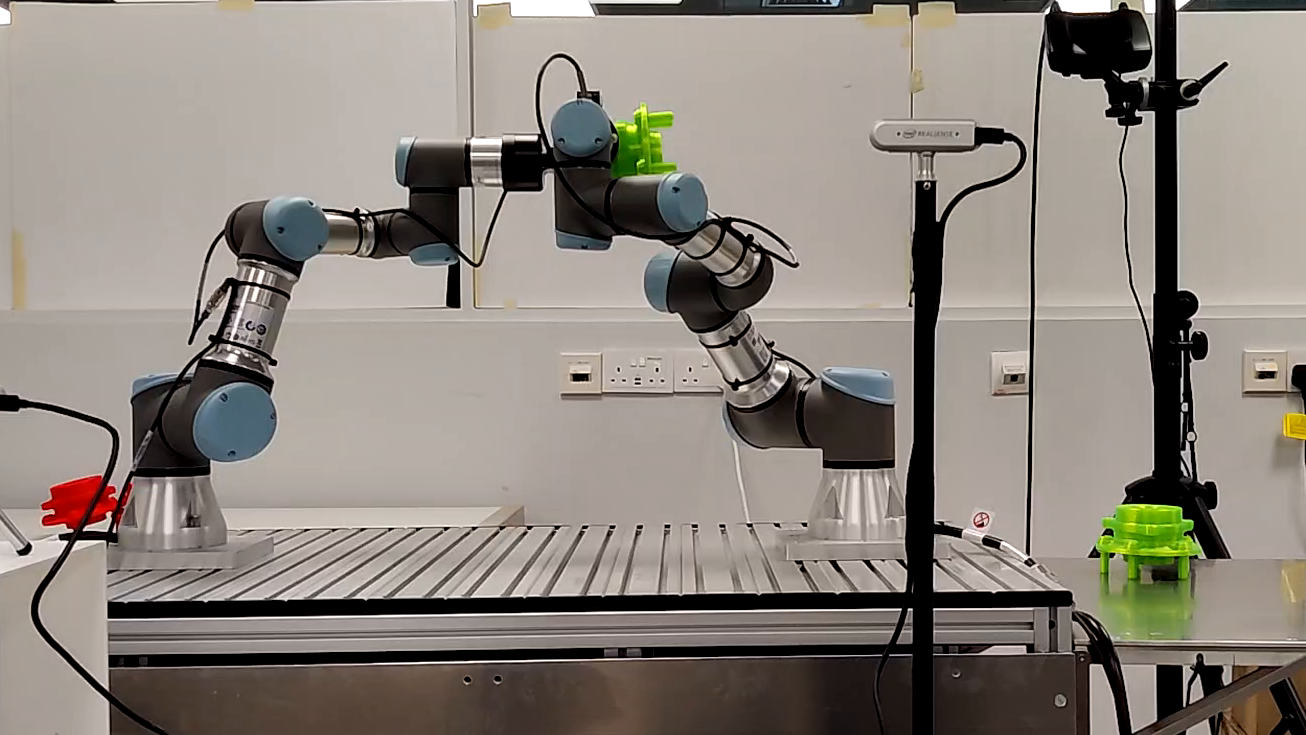}
			\caption{Handover} \label{fig:handover}
		\end{subfigure}%
		\hspace*{\fill}   
		\begin{subfigure}{0.49\columnwidth}
			\includegraphics[width=\linewidth]{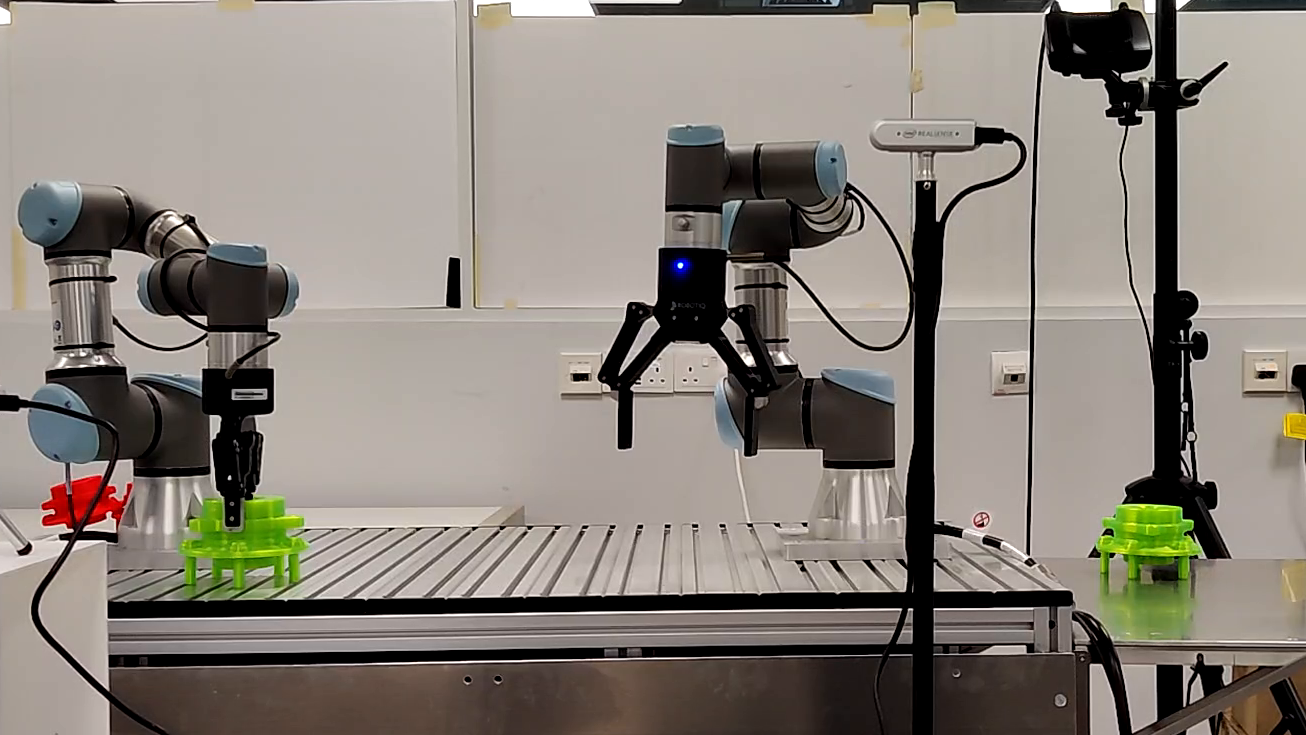}
			\caption{Discard} \label{fig:discard}
		\end{subfigure}%
		
		\caption{Snapshots of two UR3e robots realizing the logic action sequence found by our tree search for the inspection of two bearings. The video of the experiment is available at https://youtu.be/WiJU70f3EFw.} 
		\label{fig:key_trafo}
	\end{figure}
	
	Task planning in robotics has a long-running history with early beginnings in~\cite{strips1971}. The search for a symbolic goal state can be conducted for example with MCTS~\cite{Ren2021} which was first drafted in~\cite{Coulom206}. The approach in~\cite{Kocsis2006} constitutes an important extension by applying the node selection criterion UCB (Upper Confidence Bound) to trees (UCT) to guide the tree search. It balances exploitation, i.e. following promising search directions with high average reward resulting from chosen actions, and exploration, i.e. following unexplored search directions. Another important addition is Rapid Action Value Estimation (RAVE) which shares knowledge between related nodes about their actions and resulting rewards and enhances the performance of the tree search~\cite{gelly2011}. The combinatorial complexity of the search can be softened for example as proposed in~\cite{Lertkultanon2015} by using a grasp-placement table which delegates part of the combinatorial tree search into an offline process and looks up possible action sequences online. The work in~\cite{Driess2021} is based on deep learning that can deduct logic action sequences and motion plans from a scene image. The tree search is conducted only offline during the training process.
	
	The previously mentioned work in~\cite{strips1971} uses the concept of sub-goals to break down the overall goal by identifying the difference to the current state. 
	On the other hand, the work in~\cite{Davidov2006} explicitly considers the scenario of efficiently reaching pre-defined sub-goals. They implement a distance-based heuristic in order to proceed the search into a direction where the most sub-goals can be achieved while minimizing the associated cost. The authors suggest a machine-learning based method to estimate the heuristic online.
	The approach in~\cite{Henrich1998} uses the A\hspace{-1pt}\mbox{*} algorithm with an adapted heuristic to approach the physical goal configuration that is closest to the current configuration. Similarly, adaptive A\hspace{-1pt}\mbox{*} is used in~\cite{Matsuta2010} to identify the best path to traverse multiple goal locations. The algorithm proposed in~\cite{Ishida2019} functions similarly but is augmented with an additional pruning method that removes vertices that lead to goals that already have been visited at a minimum cost. The algorithm thereby only considers a subset of the overall goal state in order to further reduce the computational burden.

	In this paper, we propose a fast depth-first search algorithm for multi-goal task planning based on MCTS with PNE that prioritizes promising nodes and expands them depth first. Unlike Hierarchical MCTS~\cite{Vien2015} the prioritization is applied within the tree itself rather than identifying hierarchical structures in the action space. 
	In contrast to~\cite{Waard2016} no pre-definition of action sequences is necessary as such accurate domain knowledge may often not be available. Furthermore, the prioritization is strict (only a single branch is expanded) and does not require tuning as compared to Beam Monte Carlo Tree Search~\cite{Baier2012}. Here at each depth of the tree, the most promising nodes are selected according to some heuristic (for example node visits with a threshold) while the others are pruned.
	One critical aspect of depth-first searches is to avoid infinite branch expansions. We realize this by introducing a so-called `bridging' factor that limits the prioritization of nodes to a certain amount of iterations if no sub-goal is encountered. Together with action reduction (AR) based on a kinematic feasibility criterion, our algorithm proves to be very efficient in planning long action sequences for problems with a large number of sub-goals. The algorithm is designed to identify a first (and possibly sub-optimal) symbolic task plan as quickly as possible. 
	
	First, we describe the problem at hand, see Sec.~\ref{sec:problem}.
	We extend MCTS with PNE which we explain in Sec.~\ref{sec:algo}. We detail the algorithm's operational complexity and show that it is linear in the number of sub-goals (Sec.~\ref{sec:complexity}). We also shortly address the implications for our algorithm if it is applied to stochastic decision processes.
	The AR and the kinematic reachability criterion are outlined in Sec.~\ref{sec:ar}. Finally, the algorithm is evaluated on a task planning example for industrial robot inspection and demonstrated on real robots, see Sec.~\ref{sec:eval}.

	\section{Problem description}
	\label{sec:problem}
	
	In this work, we consider task planning scenarios where one or more symbolic goal states have to be achieved. Specifically, we consider the industrial use case of inspecting a number of wheel bearings manipulated by two stationary industrial robot arms equipped with one gripper each. The inspection pipeline for $b$ bearings is set up in the following order:
	\begin{enumerate}
		\item Inspect all sides (si) of the bearing (be) by presenting (pr: presented) it to one of the two inspection cameras (cam).
		\item Once inspected, place the bearing on a table for human inspection (plhi: placed for human inspection).
		\item Discard (di: discarded) the inspected bearing on a conveyor belt.
	\end{enumerate}
	We translate this problem into computer language by means of the Planning Domain Definition Language~\cite{pddl1998}. 
	The desired multi-goal state $\mathcal{G}_{d}$ then becomes 	
	\begin{align}
		\mathcal{G}_{d} = \{& \text{\{pr,be$_1$,si$_1$\}},\dots,\text{\{plhi,be$_1$\}},\text{\{di,be$_1$\}},\dots,  \nonumber \\
		& \text{\{pr,be$_b$,si$_1$\}},\dots,\text{\{plhi,be$_b$\}},\text{\{di,be$_b$\}}		
		\}\label{eq:goal}
	\end{align}
	It encompasses the desired overall goal of `inspect, place for human inspection and discard all $b$ bearings'.
	$\mathcal{G}_{d}$ consists of $N\coloneqq\text{card}(\mathcal{G}_{d}) = 6b$ ($\text{card}$: cardinality) \textit{sub-goals}, for example, `side 1 of bearing 1 is presented' (`pr,be$_1$,si$_1$) or `bearing $b$ is discarded' (`di,be$_b$').
	The logic language includes a set of \textit{objects} (grippers, tables, bearings, cameras).
	Furthermore, a set of \textit{predicates} indicates the current state of objects. Examples would be `free(gr)' (the gripper is currently not holding a bearing) and `placed(be$_i$,tab$_j$)' (bearing $i$ is placed on table $j$). The corresponding \textit{actions} that can change these predicates are `pick(gr$_1$,be$_i$)' and `place(gr$_1$,be$_i$,tab$_j$)'. We refer to a sequence of actions as \textit{path}.

	\begin{remark}
		\label{remark:abb}
		Our algorithm is applicable to any scenario with a symbolic sub-goal structure as described in~\eqref{eq:goal}; the specific naming of objects, actions, predicates can be subject to variation. 
		Furthermore, the above overall goal $\mathcal{G}_{d}$ could be abbreviated as `bearing $b$ is discarded' and associated with the predicates `if bearing $i$ has been presented', `if bearing $i$ has been placed for human inspection' (with $i = 1,\dots,b$) and `if bearing $j$ has been discarded' (with $j = 1,\dots,b-1$). In this case, our algorithm would apply the classical MCTS algorithm and ignore the special structure of the goal state. This is less efficient as is demonstrated in Sec.~\ref{sec:eval}. On the other hand, our proposed algorithm has the same complexity as solving each sub-goal as its own task planning problem, see Sec.~\ref{sec:complexity}. However, this approach would require a complicated implementation that transfers initial states between the problems and solves the individual problems in the correct order such that required predicates are met. At the same time, only a sub-optimal solution can be obtained as rewards over the whole problem are ignored (unless they are somehow transferred between the sub-goal task planning problems).
	\end{remark}

	\section{A priority-based tree search for multi-goal task planning}
	\label{sec:logic}

	\algtext*{EndIf}

	In this work, we employ Monte-Carlo tree search based on the UCT criterion~\cite{Kocsis2006}
	\begin{align}
		\text{UCT}(i) = \frac{w_i}{n_i \sigma_i^{\kappa}} + c\sqrt{\frac{\ln{\mathcal{N}_i}}{n_i\sigma_i^{\kappa}}}
	\end{align}
	The index $i$ indicates the current node in the tree. $w\geq 0$ is the reward that indicates the sum of the rewards of its child nodes. In this work a node is rewarded if its correspondent symbolic state corresponds to a sub-goal. $n \geq 1$ is the number of node visits and $\mathcal{N} \geq 1$ is the number of visits of the node's parent. $c>0$ is an empirically chosen bias parameter to balance exploration (second term) with exploitation (first term).
	We add an additional term $\sigma\geq 1$ (with the tuned parameter $\kappa=3$) which is the sum of the occurrences of the action at the current node's parents. This penalizes re-occurrences of the same action in order to favor diverse action sequences (if no further randomization is implemented, actions defined first in the domain file will appear more frequently than other actions with the same UCT value but defined later). Due to the higher associated computational load, we add this term only for prioritized nodes as described below.
	
	One disadvantage of the UCT criterion is that pure exploitation ($c=0$) leads to algorithm failure since the tree is expanded to infinite depth along the first node encountering a reward. If the focus on exploitation is desired careful tuning of the rewards $w>0$ and bias parameter $c>0$ would be required. Here, we  propose a greedy tree search method based on PNE which achieves pure exploitation without being trapped in infinite depth-first node expansions. It originates from the observation that nodes purely chosen according to the UCT criterion lead to the tree search getting `distracted' by other promising nodes along the first encountered promising path. For example, in our use-case of bearing inspection it is irrelevant whether the sides, top, or bottom of the bearings are inspected first. It is more beneficial to expand one of the possibilities depth first until the terminal state is reached rather than exploring several promising options in a parallel fashion.
	
	\subsection{Description of the algorithm}
	\label{sec:algo}	
	
	\makeatletter
	\newcommand\fs@betterruled{%
		\def\@fs@cfont{\bfseries}\let\@fs@capt\floatc@ruled
		\def\@fs@pre{\vspace*{5pt}\hrule height.8pt depth0pt \kern2pt}%
		\def\@fs@post{\kern2pt\hrule\relax}%
		\def\@fs@mid{\kern2pt\hrule\kern2pt}%
		\let\@fs@iftopcapt\iftrue}
	\floatstyle{betterruled}
	\restylefloat{algorithm}
	\makeatother
	\begin{algorithm}[h]
		\caption{MCTS with PNE}\label{alg:mctspne}
		\begin{algorithmic}[1]
			\Statex\textbf{Input:} {$\mathcal{G}_{d}$, $n_0$, $\beta$}
			\Statex \textbf{Output:} Goal path
			\State $\mathcal{G} = \{\}$, $p=1$, $\mathcal{T}_1 = \{n_0\}$, $\mathcal{T} = \{\mathcal{T}_1\}$ 
			\While{$\mathcal{G} \neq \mathcal{G}_{d}$}
			\State $\hat{n} = {\tt chooseNode}(\mathcal{T}_p)$
			\State $\mathcal{T}_p \leftarrow \mathcal{T}_p \setminus \hat{n}$
			\State $\hat{p} = p$
			\State  $\mathcal{A}(\hat{n}) \coloneqq \{a_1,\dots,a_k\} = {\tt find\mathcal{A}}(\hat{n})$ 
			\For{$a$ in $\mathcal{A}(\hat{n})$} \label{line:for}
			\State $n, s_n, w = {\tt simulateAction}(\hat{n}, a)$
			\State ${\tt backpropagate}(n,w)$~\label{line:backprop}
			\State $\beta_n = \beta_{\hat{n}}$
			\If{$\beta=0$ or ($s_n \notin \mathcal{G}$ and $s_n \in \mathcal{G}_{d}$)}
			\State $\beta_n = 0$
			\State $\mathcal{G}\leftarrow \mathcal{G}\cup s_n\texttt{\texttt{}}$ 
			\If{$\beta > 0$ and $p = \hat{p}$ and $\mathcal{T}_p \neq \emptyset$} 
			\State $p\leftarrow p+1$
			\State $\mathcal{T} = \{\mathcal{T}_l\vert l =1,\dots,p\}$ with $\mathcal{T}_p = \{\}$
			\EndIf
			\State $\mathcal{T}_p \leftarrow \mathcal{T}_p\cup n$ 
			\Else
			\State $\beta_n \leftarrow \beta_n+1$
			\If{$\beta_n = \beta$} 
			\State 	$\beta_n =0$
			\State $\mathcal{T}_{p-1} \leftarrow \mathcal{T}_{p-1}\cup n$
			\IIf{$\mathcal{T}_p = \emptyset$} $p \leftarrow p-1$
			\EndIf
			\EndIf
			\EndFor
			\EndWhile
			\State \Return ${\tt getPath}(\mathcal{G})$
		\end{algorithmic}
	\end{algorithm}	
	
	The algorithmic overview of our prioritized tree search is given in Alg.~\ref{alg:mctspne}. Our algorithm corresponds to the original MCTS algorithm~\cite{Kocsis2006} for $\beta=0$, also see Sec.~\ref{sec:beta}. Our algorithm differs by virtue of strictly prioritizing nodes that already have lead to sub-goals. We achieve this by subdividing the tree $\mathcal{T}$ into node sets of $p$ priority levels $\mathcal{T} = \{\mathcal{T}_l\vert l=1,\dots,p\}$. 
	
	The tree is initialized with a single level $p=1$. The root node $n_0$ (with some associated initial state $s_{n_0}$) of the otherwise empty tree is then contained in the node set $\mathcal{T}_1= \{n_0\}$ of level $l=1$. Initially we have $\mathcal{T} = \{\mathcal{T}_1\}$.
	
	First, the routine ${\tt chooseNode}$ chooses the node $\hat{n}$ with the highest UCT value of the node set of the highest priority level $\mathcal{T}_p$.
	The node $\hat{n}$ is consequently removed from $\mathcal{T}_p$. The possible action space $\mathcal{A}(\hat{n})$ at node $\hat{n}$ is determined by ${\tt find\mathcal{A}}$. Each action $a$ is simulated by ${\tt simulateAction}$, resulting in a new node $n$ with associated state $s_n$ and a reward $w$  (in our deterministic case this step simply assigns a uniform reward if the action leads to a sub-goal). The reward $w$ is back-propagated to the node's parents by ${\tt backpropagate}$. The algorithm checks whether the state $s_n$ corresponds to a sub-goal ($s_n\in\mathcal{G}_{d}$) that has not been encountered yet ($s_n\notin\mathcal{G}$). If this is the case the current goal state  is augmented to $\mathcal{G} = \{\mathcal{G},s_n\}$. Finally, the node $n$ is added to the new node set $\mathcal{T}_p$ of priority level $p\leftarrow p+1$. For higher algorithmic efficiency we could exit the for-loop (line~\ref{line:for}) as soon as a sub-goal is encountered. For better readability we do not further address this option.
	
	The increment $p\leftarrow p+1$ is only done once per expansion iteration. Furthermore, $p$ is only increased if this does not lead to an empty set $\mathcal{T}_p = \emptyset$ of the current highest priority level $p$. This is due to our usage of the so-called bridging factor $\beta$ which retains a node's expanded children $n\in\mathcal{N}$ on the same priority level $\beta$ times even if no sub-goal is encountered ($s_n\notin \mathcal{G}_{d}$). With an empty set $\mathcal{T}_p = \emptyset$, a node would be retained twice, once on the new level $p+1$, and once on the old level $p$, skewing the intended bridging factor. Each node has thereby an associated counter $\beta_n$ which indicates how many times it has been retained on the same priority level. If no sub-goal is encountered, $\beta_n$ is incremented. If this happens $\beta$ times the node's level is decreased and the counter is reset to zero. Similarly, if a sub-goal is encountered the node is elevated to the next level and the counter is reset as well.
	
	The algorithm terminates if all sub-goals have been encountered such that $\mathcal{G} = \mathcal{G}_{d}$. The corresponding path extracted by ${\tt getPath}$ is returned. The algorithm is designed to find a first (possibly sub-optimal with respect to the path length or accumulated rewards along the path) solution as quickly as possible by expanding a minimum number of nodes. The optimal solution can be identified by increasing the run-time of the algorithm without terminating it at the first solution.
	
	\subsection{Bridging factor}
	\label{sec:beta}
	
	The usage of the bridging factor $\beta$ forces the expansion of a sub-tree starting from the current prioritized node as its root until a sub-goal is encountered. By limiting the depth of the sub-tree to $\beta$, infinite tree expansions are prevented.
	
	If the bridging factor is chosen as $\beta=1$ the priority of a node is reduced by one to $l\leftarrow l-1$ if it does not result in a sub-goal. Any value larger than one leads to retention of $\beta$ expansion iterations on the same priority level even if the node does not correspond to a sub-goal. If the bridging factor is chosen as $\beta=0$, the expanded nodes are not subject to prioritization. The algorithm corresponds to the original MCTS tree search algorithm. 
	
	The bridging factor $\beta$ can be tuned according to specific domain knowledge. In our case of bearing inspection, an occluded face of the bearing can be inspected by re-grasping the bearing. This can be achieved by placing the currently held bearing and by picking it up again with a different grasp. The bridging factor could therefore be set accordingly to $\beta=3$ (place - pick - present). 
	From our experience, it is thereby beneficial to rather choose a too large than a too small bridging factor. Specifically, if the bridging factor is chosen large enough (or \textit{correctly}), for deterministic decision processes the search complexity  becomes linear in the number of sub-goals as we show in the next section~\ref{sec:complexity}.

	\subsection{Search complexity}
	\label{sec:complexity}
	
	In the following, we detail the complexity of our search algorithm for deterministic decision processes and show that it is linear in the number of sub-goals if the bridging factor is chosen correctly.
	In the following, $\mathbb{A} = \text{card}(\mathcal{A}_{\max})$ is the maximum number of possible actions that can occur at any node. 
	
	\begin{definition}[Correct bridging factor] 
		We refer to the bridging factor $\beta$ as being \textit{correct} if at any stage of the tree search any sub-goal can be achieved within a sub-tree of depth~$\beta$.
	\end{definition}
	
	\begin{theorem}
		If the bridging factor is chosen correctly the algorithm converges to the terminal goal state $\mathcal{G}_{d}$ within a tree of at most $N \mathbb{A}^{\beta}$ nodes. 
	\end{theorem}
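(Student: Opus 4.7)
The plan is to prove the two directions of the biconditional separately, treating the forward direction (correct~$\beta$ implies the $N\mathbb{A}^\beta$ bound plus convergence) via induction on the number of encountered sub-goals, and the reverse direction by contraposition. The central observation that drives both arguments is a structural invariant of Alg.~\ref{alg:mctspne}: between two successive sub-goal encounters, the PNE mechanism reduces tree exploration to a strictly depth-first expansion rooted at the most recently promoted node, because every newly expanded child inherits the current highest priority level $p$ and is chosen again by \texttt{chooseNode} at the next iteration unless/until it either reaches a sub-goal (reset and stay) or exhausts its $\beta_n$ budget (demote to level $p-1$).

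For the forward direction I would proceed by strong induction on $k$, the number of distinct sub-goals in $\mathcal{G}$. The base case $k=0$ is immediate: the tree contains only $n_0$. For the inductive step, suppose $k$ sub-goals have been found with at most $k\mathbb{A}^\beta$ nodes in the tree. The highest-priority set $\mathcal{T}_p$ contains the node $n^\star$ that produced the most recent sub-goal (or $n_0$ if $k=0$). By the definition of a correct~$\beta$, some descendant of $n^\star$ at depth $\leq \beta$ corresponds to a new sub-goal in $\mathcal{G}_d\setminus\mathcal{G}$. I then argue that, before any node on level $p$ can be demoted back to level $p-1$, the PNE rule forces a depth-first exploration of the sub-tree rooted at $n^\star$: each intermediate node has its counter $\beta_n$ incremented only when selected, and the counter is reset whenever a sub-goal is met. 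Hence the algorithm is guaranteed to encounter the next sub-goal before opening more than $\sum_{i=1}^{\beta}\mathbb{A}^i \leq \mathbb{A}^\beta$ fresh nodes under~$n^\star$ (using $\mathbb{A}\geq 2$ or absorbing the geometric sum into the stated bound), which closes the induction and yields the $N\mathbb{A}^\beta$ total after $N$ promotions.

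For the converse I assume $\beta$ is incorrect and exhibit a run in which the bound fails. Incorrectness means there is a reachable stage at which the current prioritized node $n^\star$ has no sub-goal in its depth-$\beta$ sub-tree. I trace what Alg.~\ref{alg:mctspne} does in that situation: it fully enumerates that depth-$\beta$ sub-tree while decrementing $\beta_n$ counters, contributing $\Theta(\mathbb{A}^\beta)$ nodes without any sub-goal being added to $\mathcal{G}$, after which $n^\star$ is demoted and the process continues from lower priority levels, each of which may again be forced through the same exhaustion. I then argue that this preamble, already charged in full to the quota of the next sub-goal, strictly exceeds the per-sub-goal budget $\mathbb{A}^\beta$ permitted by the bound, and across the remaining $N-k$ sub-goals it cannot be amortized away. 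Convergence itself is preserved (since MCTS is complete once $\beta$-bounded prioritization collapses into ordinary exploration) but the $N\mathbb{A}^\beta$ inequality is violated, which is the contrapositive we need.

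The main obstacle I expect is the careful bookkeeping in the forward direction, specifically showing that the $\beta_n$ counter reset on sub-goal encounter together with the rule "increment $p$ only once per expansion iteration, and never into an empty level" really produces a clean depth-first exhaustion of the depth-$\beta$ sub-tree of $n^\star$ and nothing outside it; one has to rule out the parasitic case in which older, still-present siblings at level $p$ steal priority from $n^\star$'s descendants. I would handle this by a short lemma stating that, immediately after a sub-goal is recorded, the newly created child is the unique element of $\mathcal{T}_{p+1}$, and that PNE therefore never revisits unrelated branches until $\mathcal{T}_{p+1}$ is either emptied by demotions or extended by further promotions — exactly the invariant the induction needs.
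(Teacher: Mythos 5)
Your forward direction is essentially the paper's own proof: the paper argues informally that, starting from the most recently promoted node, correctness of $\beta$ yields the next sub-goal within a sub-tree of at most $\mathbb{A}^{\beta}$ nodes, and repeats this $N$ times to get $N\mathbb{A}^{\beta}$. You make the same induction explicit and, usefully, isolate exactly the bookkeeping the paper glosses over (that between promotions the search stays inside the depth-$\beta$ sub-tree of $n^\star$ and is not hijacked by older level-$p$ siblings or by nodes demoted from failed branches). Your observation that a depth-$\beta$ sub-tree actually contains up to $\sum_{i=1}^{\beta}\mathbb{A}^{i} > \mathbb{A}^{\beta}$ nodes is correct and is a (minor) imprecision in the paper's own bound as well; the paper simply asserts ``at most $\mathbb{A}^{\beta}$ nodes'' per sub-goal without this care.

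Where you genuinely diverge is the converse: the paper states the theorem as an ``if and only if'' but its proof addresses only the ``if'' direction, so your attempt at the reverse implication is an addition, not a restatement. However, as sketched it has a gap. Showing that one prioritized node exhausts a $\Theta(\mathbb{A}^{\beta})$ sub-tree without yielding a sub-goal does not by itself violate the \emph{aggregate} bound $N\mathbb{A}^{\beta}$: if the remaining sub-goals are each found after far fewer than $\mathbb{A}^{\beta}$ expansions, the wasted preamble can be absorbed and the total node count can still satisfy the inequality, so ``cannot be amortized away'' is asserted rather than proved. To make the converse sound you must either read the claim as a worst-case guarantee and construct an adversarial instance in which sufficiently many (or all) sub-goals trigger the exhaustion-and-demotion cascade — in which case the search degenerates toward ordinary UCT with its exponential complexity and the bound clearly fails — or concede that, as a statement quantified over all runs, the ``only if'' direction is simply not true. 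Given that the paper offers no argument for this direction at all, flagging and repairing it this way would be a genuine improvement rather than a deviation.
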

	
	\begin{proof}
		Let the search start with a root node in an otherwise empty tree. Due to the assumption of a correctly chosen bridging factor, we find a sub-goal that augments the current state to $\mathcal{S} = \{s_i\}$ with $i \in \{1,\dots,N\}$ within a sub-tree of at most $\mathbb{A}^{\beta}$ nodes. This node is then elevated to the next priority level and, due to our hierarchical algorithm structure, picked for the next round of expansions as the root node of a new sub-tree. Again, due to our assumption of a correct bridging factor, we find a sub-goal that augments the current state to $\mathcal{S} = \{s_i, s_j\}$ with $j \in \{1,\dots,N\} \setminus i$ within a sub-tree of at most $\mathbb{A}^{\beta}$ nodes. The tree is currently expanded to $2\mathbb{A}^{\beta}$ nodes. The algorithm repeats this procedure overall $N$ times until the goal state $\mathcal{S}_g$ is reached. The search complexity is therefore upper bounded by $N \mathbb{A}^{\beta}$.
	\end{proof}
	
	This is in contrast to a breadth-first approach with a single tree of depth $N\beta$ and at most $\mathbb{A}^{N\beta}$ nodes (since with the assumption of a correct bridging factor the goal state is necessarily contained within). On the other hand, our algorithm has the same complexity as solving each sub-goal as its own task planning problem. As remarked in Rem.~\ref{remark:abb}, this requires more complicated implementations. If the bridging factor is chosen as $\beta = 0$ the algorithm has the same complexity as the original MCTS algorithm.
	
	For stochastic decision processes the definition of correctness of the bridging factor is not applicable. Rather, we only have correctness of the probabilistic bridging factor $\beta_{\mathcal{P}}$ \textit{to a probability} ${\mathcal{P}}$. In our example of bearing inspection, assume that the inspection camera has the probability $f$ of failing to trigger the shutter when the bearing is presented (this is reflected in the simulation step ${\tt simulateAction}$ of the action `present'). Furthermore, assume that $\beta=3$ (place - pick - present) is the correct bridging factor for inspecting a side of the bearing in the deterministic case. The  bridging factor $\beta_{\mathcal{P}} = 2+u$ is then correct to probability ${\mathcal{P}} = 1-f^u$ that a side of the bearing is inspected successfully (by virtue of repeating the present action $u > 0$ times). Similarly, this probability also holds for the search complexity.
	
	\subsection{Example visualizing PNE}
	
	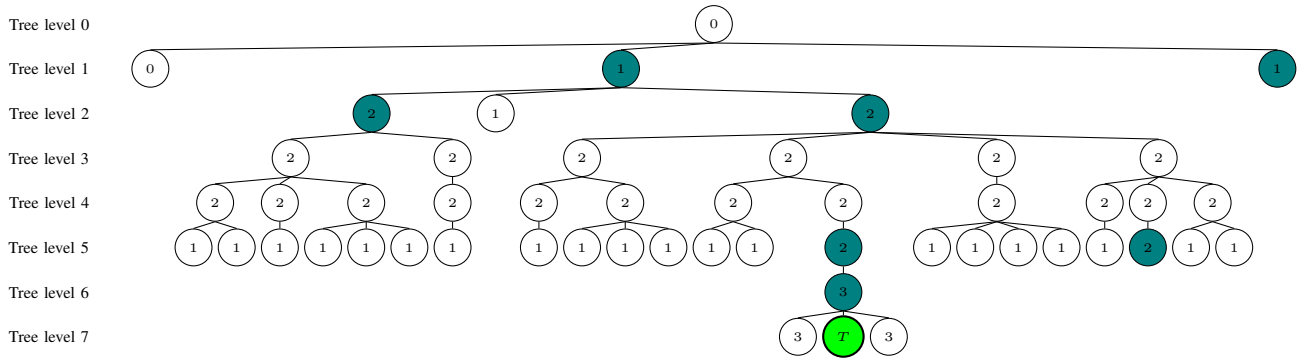
\begin{figure*}
		\hspace{5pt}
		\centering
		\resizebox{2.0\columnwidth}{!}{%
			\begin{tikzpicture}[thick,scale=0.8, every node/.style={scale=0.8}
				,Comment/.style = {
					shape=rectangle
					,  draw=none
					,  inner sep=0mm
					,  outer sep=0mm
					,  minimum height=5mm
					,  align=right
				}]
				\tikzset{font=\tiny,
					level distance=0.6cm,
					every node/.style=
					{
						circle,draw,
						align=center
					}
					,
					edge from parent/.style=
					{draw}
				}
				\Tree 
				[.\node (N0) {$0$};
				[.\node(N1){$0$}; ]
				[.\node[fill=teal]{$1$};
				[.\node[fill=teal](N2){$2$}; 
				[.\node(N3){$2$}; 
				[.\node(N4){$2$}; 
				[.\node(N5){$1$}; ]
				[.{$1$} ]]
				[.{$2$} 
				[.{$1$} ]]
				[.{$2$} 
				[.{$1$} ]
				[.{$1$} ]
				[.{$1$} ]] ]
				[.{$2$} 
				[.{$2$} 
				[.{$1$} ]]]]
				[.{$1$} ]
				[.\node[fill=teal]{$2$};
				[.{$2$} 
				[.{$2$} 
				[.{$1$} ]]
				[.{$2$} 
				[.{$1$} ]
				[.{$1$} ]
				[.{$1$} ]]]
				[.{$2$} 
				[.{$2$} 
				[.{$1$} ]
				[.{$1$} ]]
				[.{$2$} 
				[.\node[fill=teal]{$2$};
				[.\node[fill=teal](N6){$3$};
				[.{$3$} ]
				[.\node[fill=green,thick](N7){$T$};]
				[.{$3$} ]]
				]]]
				[.\node[]{$2$}; 
				[.\node[]{$2$}; 
				[.\node[]{$1$}; ]
				[.\node[]{$1$}; ]
				[.\node[]{$1$}; ]
				[.\node[]{$1$}; ]]]
				[.\node[]{$2$}; 
				[.\node[]{$2$}; 
				[.\node[]{$1$}; ]]
				[.\node[]{$2$}; 
				[.\node[fill=teal]{$2$}; ]]
				[.\node[]{$2$}; 
				[.\node[]{$1$}; ]
				[.\node[]{$1$}; ]]]]]
				[.\node[fill=teal]{$1$}; ]
				]
				\node[Comment] (L0)    [left=65.0mm of N0] {Tree level 0};
				\node[Comment] (L1)    [left=4.5mm of N1] {Tree level 1};
				\node[Comment] (L2)    [left=28.25mm of N2] {Tree level 2};
				\node[Comment] (L3)    [left=19.6mm of N3] {Tree level 3};
				\node[Comment] (L4)    [left=11.45mm of N4] {Tree level 4};
				\node[Comment] (L5)    [left=9.2mm of N5] {Tree level 5};
				\node[Comment] (L6)    [left=78.95mm of N6] {Tree level 6};
				\node[Comment] (L7)    [left=78.7mm of N7] {Tree level 7};
			\end{tikzpicture}
		}
		\caption{Graphical overview of PNE with the incorrectly chosen bridging factor $\beta=3$ and $N=5$ sub-goals. The node number indicates the current priority level of the node. Dark green nodes are sub-goal states. Light green nodes are terminal nodes where $\mathcal{G} = \mathcal{G}_{d}$.}
		\label{fig:PNE}
	\end{figure*}
	
	An example with a graphical overview of PNE is given in Fig.~\ref{fig:PNE}. The number of sub-goals is $N=5$ (\{pr,be,si$_1$\},\{pr,be,si$_2$\},\{pr,be,si$_3$\},\{plhi,be\},\{di,be\}). The bridging factor is incorrectly chosen as $\beta=3$ (for the deterministic case; for the stochastic case $\beta_p$ is correctly chosen to a probability $p<1$).
	
	On the first tree level, two sub-goal nodes (\{pr,be,si$_1$\}, \{pr,be,si$_2$\}) are encountered and elevated to the node set $\mathcal{T}_1$ of the first priority level $p=l=1$. The prioritized node with the highest UCT value is expanded (\{pr,be,si$_1$\}). Two sub-goal nodes (\{pr,be,si$_2$\},\{pr,be,si$_3$\}) are encountered and elevated to the second priority level $\mathcal{T}_2$. The leftmost branch (\{pr,be,si$_2$\}) is expanded to a sub-tree of depth $3$ without encountering a sub-goal. Due to the bridging factor $\beta=3$, the sub-tree search is aborted and the priority levels of all child nodes on tree level 5 are reduced to $l=1$. Potentially, once the priority level is reduced to 0 the node leaves the set of prioritized nodes entirely and is subject to the normal UCT-based selection, given that the sets of prioritized nodes is empty ($\mathcal{T} = \emptyset$). 
	
	The second node on priority level 2 on tree level 2 \{pr,be,si$_3$\} is expanded. Two sub-goals (\{pr,be,si$_2$\},\{plhi,be\}) occur at sub-tree level 3 (tree level 5), for example after the bearing has been placed and re-grasped. The corresponding nodes are kept on priority level $2$ since otherwise this priority level would be left empty. This prevents a skewed bridging factor as the node and its children would be retained twice, first on priority level $l+1=3$ and then on $l=2$. The first expanded node leads to the goal state $\mathcal{S}_g$ at tree level 7 (after discovering the sub-goals \{plhi,be\} and \{di,be\}). Note how the second sub-goal node on tree level 1 ( \{pr,be,si$_2$\}) is never expanded, independent of the reward $w>0$. This would not necessarily be the case for a tree search purely based on UCT, depending on the choice of reward $w$ and bias $c$.
	
	\subsection{Action reduction (AR)}
	\label{sec:ar}
	
	In order to soften the combinatorial complexity of the tree search, we define `global' actions which are removed from the action space if they turn out to be `globally' infeasible. 
	If a global action first occurs in $find\mathcal{A}$ during the tree search, it is checked for its kinematic feasibility by solving the associated inverse kinematics problem. In case this action is infeasible (i.e. the inverse kinematics problem can only be solved with an error above a certain threshold), we remove it from the action space and mark it as globally infeasible. This prevents future occurrences during the search across the whole tree (and not only on a single branch) by sharing knowledge between nodes similarly to RAVE~\cite{gelly2011}.
	
	Such global actions have to be well defined and should not become feasible in later stages of an action sequence. In our use case, this would be a fixed-base robot that meets invariant kinematic reachability conditions of its workspace. For example, a fixed table out of reach can never be reached since the robot is immobile. This eliminates any actions that are associated with the robot and the table.

	\section{Evaluation}
	\label{sec:eval}

	\begin{table*}[htp!]
		\centering
		\resizebox{2.\columnwidth}{!}{%
			\begin{tabular}{@{} ccccccccccccccc @{}}  
				\\
				\toprule
				& & UCT & \multicolumn{4}{c}{UCT+AR} & \multicolumn{8}{c}{UCT+AR+PNE} \\
				\cmidrule(lr){3-3} \cmidrule(lr){4-7} \cmidrule(lr){8-15}
				& & & & & & & \multicolumn{4}{c}{$\beta=2$} & \multicolumn{4}{c}{$\beta=5$} \\
				\cmidrule(lr){8-11} \cmidrule(lr){12-15}
				$b$ & $N$ & $\mathbb{D}$ / $\mathbb{L}$ & $\mathbb{D}$    & $\mathbb{L}$ & $\mathbb{I}$ / $\mathbb{A}$ & $\mathbb{T}_L$ / $\mathbb{T}_A $ & $\mathbb{D}$ & $\mathbb{L}$ & $\mathbb{I}$ / $\mathbb{A}$  & $\mathbb{T}_L$ / $\mathbb{T}_A $ &  $\mathbb{D}$ & $\mathbb{L}$ & $\mathbb{I}$ / $\mathbb{A}$ & $\mathbb{T}_L$ / $\mathbb{T}_A $\\
				\cmidrule(lr){1-1} \cmidrule(lr){2-2} \cmidrule(lr){3-3} \cmidrule(lr){4-7} \cmidrule(lr){8-11} \cmidrule(lr){12-15}
				1 & 6 & - & 231 & 18 & 7 / 16 & 0.1 / 5.0 & 651 & 25 & 7 / 16 & 0.1 / 5.1 & 127 & 24 & 7 / 16 & 0.1 / 5.3\\
				2 & 12 & - & -  & - & 7 / 16 & 3.7 / 4.1 & 601 & 35&  7 / 18 & 0.2 / 5.1 & 139 & 35 & 7 / 17 & 0.1 / 4.7\\
				3 & 18 & - & - & - & 10 / 25 & 3.8 / 6.6& 1329 & 52 & 10 / 26 & 0.4 / 7.7 &301 & 52 & 10 / 24 & 0.1 / 6.4 \\
				4 & 24 & - & - & - & 13 / 36 & 4.0 / 10.7 & 2747& 69 & 13 / 35 & 1.2 / 10.8 & 652 & 69 & 13 / 32 & 0.3 / 9.1 \\
				5 & 30 & - & - & - & 12 / 44 & 4.0 / 11.6 & 6101 & 86 & 16 / 45  & 5.4 / 14.7 & 1371 & 86 & 16 / 41 & 0.6 / 12.0\\
				6 & 36 & - & - & - & 14 / 58 & 4.2 / 16.7 & 18968 & 103 & 19 / 56 & 61.7 / 19.1 &2709 & 103 & 19 / 51 & 1.3 / 15.7\\
				7 & 42 & - & - & - & 16 / 63 & 4.4 / 16.5 & 23700 & 120 & 21 / 64  & 98.4 / 24.0 &3759 & 128 & 21 / 57 & 2.3 / 18.7\\
				8 & 48 & - & - & - & 18 / 79 & 4.6 / 21.4 & - & - & 24 / 82 & 145.4 / 33.0 & 6413 & 145 & 24 / 68  & 4.3 / 22.9 \\
				9 & 54 & - & - & - & 20 / 97 & 4.7 / 25.3 & -&- & 27 / 91  & 72.8 / 35.7& - & - & 27 / 79 & 51.0 / 26.9\\
				\bottomrule
			\end{tabular}
		}
		\caption{Tree search on the bearing inspection problem. $\mathbb{D}$ is the number of expanded tree nodes at convergence $\mathcal{G}=\mathcal{G}_d$. $\mathbb{L}$ is the length of the identified action sequence. $\mathbb{A}$ is the number of encountered actions and $\mathbb{I}$ is the number of infeasible ones. $\mathbb{T}_L$ and $\mathbb{T}_A$ are the computation times in seconds for the tree search and inverse kinematic checks, respectively. Empty entries `-' indicate that no solution was found within the limit of $\mathbb{D}=3 \cdot 10^4$ tree nodes.}
		\label{tab:treeEval}
	\end{table*}
	
	We evaluate our developments in the industrial scenario of bearing inspection described in Sec.~\ref{sec:problem}.
	The bearings can only be placed on a table reachable by one of the two robots. Each table has $b$ different spots and each spot can hold only a single of the $b$ bearings. Each of the two inspection cameras can only be reached by one of the robots. A scene image is given in Fig.~\ref{fig:key_trafo}. 
	The overall action space consists of 17 different actions with additional parametrization in the used robots, bearings, and table spots.
	Both robots are equipped with grippers of different sizes. The smaller gripper can grasp the bearing only from the top by inserting a finger into the bearing shaft. The bigger gripper can fully grasp the bearing from the top as well as from the side. The bearing can be passed between the robots via handover. Four different present actions for the full inspection of the bearing without occlusion are considered: one each for the top and the bottom and two for the sides of the bearing.
	
	The overall inspection pipeline consists of three stages:
	\begin{enumerate}
		\item Object localization.
		\item Logic sequence by MCTS with PNE and AR.
		\item Robot trajectory planning by OpenRAVE \cite{diankov2010automated}.
	\end{enumerate}
	The detected bearing locations are used to identify kinematically feasible logic action sequences to achieve the desired symbolic goal state. The inverse kinematics problems for AR are solved to low accuracy by a fast non-linear local optimizer based on the penalty method. In the final stage, a high-accuracy robot joint trajectory including obstacle avoidance is determined by OpenRAVE. We make the simplifying assumption that the logic action sequence is always realizable by the robots. This can be justified by the limited complexity of the scenario given by the fixed-base manipulators.
	
	In the following, we evaluate the tree search algorithm (Sec.~\ref{eval:treesearch}) and apply the pipeline to the real-robot inspection scenario (Sec.~\ref{eval:realrob}).
	
	\subsection{Tree search}
	\label{eval:treesearch}

	In this section, we consider the tree search for our bearing inspection problem. We test our algorithm as the combination of UCT, AR, and PNE (UCT+AR+PNE) against UCT~\cite{Kocsis2006} and UCT+AR for the inspection of up to 9 bearings. Thereby, we evaluate PNE with the two different bridging factors $\beta=2$ and $\beta=5$. We set $c=\sqrt{2}$ and assign the uniform reward $w=1$ to nodes whose states correspond to sub-goals (presented, placed for human inspection, discarded). The other states are unrewarded (picked, placed, ...). 
	
	The results are summarized in table~\ref{tab:treeEval}.
	Our proposed method UCT+AR+PNE ($\beta=5$) is the only method that finds solutions $\mathcal{G}=\mathcal{G}_d$ for up to $b=8$ bearings ($N=48$) within the tree size limit of $3\cdot 10^4$ nodes (6413 nodes at convergence). UCT alone does not find solutions for any number of bearings. UCT+AR only finds a solution for one bearing.
	
	For UCT+AR+PNE ($\beta=5$) the dimension of the tree ($\mathbb{D}$) grows approximately linearly in the number of bearings (for 1 bearing $\mathbb{D}=127$, for 8 bearings $\mathbb{D}=6413\approx 7\cdot8 \cdot 127$ with a small factor of 7). This confirms our previous statement in Sec.~\ref{sec:complexity} regarding the linear search complexity in the number of sub-goals $N$ for a correctly chosen bridging factor. In the same vein, it can be observed that $\beta=2$ is not a correctly chosen bridging factor due to the higher number of expanded nodes due to the tendency of a MCTS inspired exploratory search (since no sub-goal can be found within the sub-trees of depth 2).
	For all number of bearings, the actual search space is significantly less than the theoretical upper bound (for $b=8$: 6413 nodes compared to $48\cdot 44^5$ nodes with $N=48$, $a=68-24 =44$, $\beta = 5$). This is due to the fact that the actual action space at each node is significantly smaller than the possible maximum one due to given predicates. Furthermore, a sub-goal is oftentimes encountered at a much earlier stage of the sub-tree than at the conservatively chosen depth~$\beta$.
	Due to the limited number of expanded nodes, the computation times for the tree search $\mathbb{T}_L$ are under 1~s for up to 5 bearings for UCT+AR+PNE ($\beta=5)$. This is in contrast to UCT+AR where the tree search takes longer than 1~s if a logic action sequence for more than 1 bearing is to be found. 
	However, if the number of expanded nodes increases due to an incorrectly chosen bridging factor ($\beta=2$) the tree search becomes expensive due to the more expensive UCT criterion relying on the factor $\sigma$ as the sum of occurrences of actions along its path (145.4~s for $3\cdot10^4$ expanded nodes, 8 bearings).
	At the same time, solving the inverse kinematics problems for AR takes $\mathbb{T}_A=12$~s for 5 bearings for UCT+AR+PNE ($\beta=5$, time for tree search $\mathbb{T}_L=0.6$~s). Overall 41 inverse kinematics problems are solved of which 16 turn out to be infeasible.
	
	The solutions of UCT+AR+PNE for one bearing are more expensive with a longer path length ($\mathbb{L}$) of 25 compared to 18 for UCT+AR. This is due to the fact that UCT+AR+PNE tends to find sub-optimal solutions with a larger gap between encountered sub-goals. This can also be observed for the case of 7 bearings with a path length of 120 and 128 for UCT+AR+PNE with $\beta=2$ and $\beta=5$, respectively. If the bridging factor is correctly chosen the maximum resulting path length $\mathbb{L}$ is given by $N\beta$. It can be observed that this upper bound holds only for the correctly chosen bridging factor $\beta=5$ for all number of bearings where a solution is found.

	\subsection{Experiment on real robot}
	\label{eval:realrob}
	
	We realize the logic action sequence from Sec.~\ref{eval:treesearch} for two bearings on two UR3e robots identified by UCT+AR+PNE with the bridging factor $\beta=5$ (sequence of 35 actions). Due to the complexity of the actions (for example handovers) we hard-code all of their parameters like gripper locations and orientations. The only variability is given by the object localization and collision avoidance. This is in contrast to task and motion planning (TAMP) approaches like LGP~\cite{Toussaint2015} where a constrained optimization problem over the whole kinematic trajectory is solved. 
	
	Initially, one bearing is placed on the left table and the other one on spot 1 of the right table. The bearing on the left is picked up first, partly presented to the left inspection camera, and handed over to the robot on the right. The handover includes force control as described in~\cite{Pfeiffer2017}. After the bearing has been placed on spot 2 of the right table we run the object localization to account for inaccuracies in the handover procedure. The remaining sides of the two bearings are then sequentially picked, placed and presented to the inspection camera on the right. Finally, both bearings are discarded on the conveyor belt on the left after two handovers.
	
	\section{Conclusion}
	
	In this paper, we have presented a depth-first logic tree search for multi-goal deterministic decision processes based on PNE and AR. It is able to quickly identify long action sequences in the presence of a high number of sub-goals. We showed that its computational complexity corresponds to the one of solving each sub-goal problem individually while avoiding the additional implementation burden. We demonstrated in a real-robot scenario that these logic action sequences can be indeed realized by two robots in a bearing inspection scenario. 
	
	In future work, we would like to explore the possibility of an automated selection of the bridging factor for example based on machine learning methods similar to~\cite{Davidov2006}.
	
	\balance
	\bibliographystyle{IEEEtran}
	\bibliography{bib.bib}
	
\end{document}